\documentclass[10pt,conference,a4paper]{IEEEtran}
%

%

%
\ifCLASSOPTIONcompsoc
  \usepackage[nocompress]{cite}
\else
  \usepackage{cite}
\fi
%

%
\ifCLASSINFOpdf
\else
\fi
\hyphenation{op-tical net-works semi-conduc-tor}

\usepackage{todonotes}

\usepackage{multirow}
\usepackage[inline]{enumitem}
\usepackage{arydshln}
\usepackage{booktabs}
\usepackage{colortbl}

\makeatletter
\def\blfootnote{\xdef\@thefnmark{}\@footnotetext}
\makeatother

\usepackage{bm}
\usepackage{xspace}
\usepackage{mathrsfs}
\usepackage{mathtools}

\usepackage{amsmath}
\usepackage{amssymb}
\usepackage{amsthm}

\newtheorem{proposition}{Proposition}
\newtheorem*{proposition*}{Proposition}




\newcommand\ie{i.e.\xspace}
\newcommand\eg{e.g.\xspace}

\newcommand\Reals{\mathbb{R}}




\DeclareMathOperator{\relu}{ReLU}

\DeclareMathOperator{\identity}{id}
\DeclareMathOperator{\lrelu}{LReLU}
\DeclareMathOperator{\prelu}{PReLU}
\DeclareMathOperator{\elu}{ELU}
\DeclareMathOperator{\selu}{SELU}
\DeclareMathOperator{\softmax}{softmax}

\newcommand\vect[1]{{\boldsymbol{#1}}}

\newcommand\Set[1]{{\bm{\mathcal{#1}}}}
\newcommand\VectorSpace[1]{\bm{\uppercase{#1}}}
\newcommand\LinearMaps[2]{\mathrm{Lin}(#1, #2)}
\newcommand\Translations[1]{\mathrm{K}(#1)}

\DeclareMathOperator{\ConvexHull}{conv}
\DeclareMathOperator{\AffineHull}{aff}
\DeclareMathOperator{\ConvexCone}{cone}
\newcommand\muacro[1]{\texttt{#1}} 





\newcommand\Minus{\textrm{-}}

\newcommand\pp{\muacro{pp}\xspace}

\newcommand\lenet{\muacro{LeNet-5}\xspace}
\newcommand\resnet{\muacro{ResNet-56}\xspace}
\newcommand\kerasnet{\muacro{KerasNet}\xspace}
\newcommand\alexnet{\muacro{AlexNet}\xspace}
\newcommand\mnist{\muacro{MNIST}\xspace}
\newcommand\fashionmnist{\muacro{Fashion-MNIST}\xspace}
\newcommand\cifar{\muacro{CIFAR-10}\xspace}
\newcommand\ilsvrc{\muacro{ILSVRC-2012}\xspace}
\newcommand\rmsprop{\muacro{RMSprop}\xspace}
\newcommand\sgd{\muacro{SGD}\xspace}

\begin{document}

%
\title{Learning Combinations of Activation Functions}

\author{
  \IEEEauthorblockN{Franco Manessi}
\IEEEauthorblockA{Strategic Analytics\\ lastminute.com group\\ Chiasso, Switzerland \\Email: franco.manessi@lastminute.com}
\and
\IEEEauthorblockN{Alessandro Rozza}
\IEEEauthorblockA{Strategic Analytics\\ lastminute.com group\\ Chiasso, Switzerland \\
Email: alessandro.rozza@lastminute.com}
}


%


\maketitle

\begin{abstract}
In the last decade, an active area of research has been devoted to design novel activation functions that are able to
help deep neural networks to converge, obtaining better performance. The training procedure of these architectures
usually involves optimization of the weights of their layers only, while non-linearities are generally pre-specified and
their (possible) parameters are usually considered as hyper-parameters to be tuned manually. In this paper, we introduce
two approaches to automatically learn different combinations of base activation functions (such as the identity
function, ReLU, and tanh) during the training phase. We present a thorough comparison of our novel approaches
with well-known architectures (such as LeNet-5, AlexNet, and ResNet-56) on three standard datasets (Fashion-MNIST, CIFAR-10,
and ILSVRC-2012), showing substantial improvements in the overall performance, such as an increase in the top-1 accuracy for
AlexNet on ILSVRC-2012 of 3.01 percentage points.
\end{abstract}


\blfootnote{Published as a conference paper at ICPR 2018.}

%
\IEEEpeerreviewmaketitle

\section{Introduction}
In the last decade, deep learning has achieved remarkable results in computer vision, speech recognition and natural language processing, obtaining in some tasks human-like \cite{xiong2016achieving} or even super-human\cite{Ciresan11} performance. The roots of recent successes of deep learning can be found in: 
\begin{enumerate*}[label=(\roman*)]
    \item the increase of the data available for training neural networks, 
    \item the rising of commodity computational power needed to crunch the data,
    \item the development of new techniques, architectures, and activation functions that improve convergence during training of deeper networks, overcoming the obstacle of vanishing/exploding gradient\cite{bengio1994learning, glorot2010understanding}.
\end{enumerate*}

For many years, neural networks have usually employed logistic sigmoid activation functions. Unfortunately, this activation is affected by saturation issues. This problem reduces its effectiveness and, nowadays, its usage in feedforward networks is discouraged~\cite{Goodfellow16}. 
To overcome such weakness and improve accuracy results, an active area of research has been devoted to design novel activation functions. 

The training procedure of these architectures usually involve optimization of the weights of their layers only, while
non-linearities are generally pre-specified and their (possible) parameters are usually considered as hyper-parameters
to be tuned manually. In this paper, we introduce two approaches able to automatically learn combinations of base
activation functions (such as: the identity function, $\relu$, and $\tanh$) during training; our aim is to identify a
search space for the activation functions by means of convex combination or affine combination of the base functions. To
the best of our knowledge, this is one of the first attempts to automatically combine and optimize activation functions
during the training phase.

We tested different well-known networks employing customary activation functions on three standard datasets and we
compared their results with those obtained applying our novel approaches. The techniques proposed in this paper
outperformed the baselines in all the experiments, even when using deep architectures: we found a $3.01$ percentage
points increase in the top-1 accuracy for \alexnet on \ilsvrc.

This paper is organized as follows: in Section~\ref{sec:related} the related works are summarized; in Section~\ref{sec:methods} the proposed methods are
described; in Section~\ref{sec:results} the experimental results are presented; finally, conclusions and future works are summarized in
Section~\ref{sec:conclusion}.

\section{Related Work}\label{sec:related}

Designing activation functions that enable fast training of accurate deep neural networks is an active area of research.
The rectified linear activation function, introduced in \cite{jarrett2009best} and argued in \cite{glorot2011deep} to be
a better biological model than logistic sigmoid activation function, has eased the training of deep neural
networks by alleviating the problems related to weight initialization and vanishing gradient.
Slight variations of $\relu$ have been proposed over the years, such as \emph{leaky ReLU} ($\lrelu$), \cite{Maas2013},
which addresses \emph{dead neuron} issues in $\relu$ networks, \emph{thresholded ReLU}~\cite{konda2014zero},
which tackles the problem of large negative biases in autoencoders, and \emph{parametric ReLU} ($\prelu$)\cite{prelu},
which treats the leakage parameter of $\lrelu$ as a per-filter learnable weight.

A smooth version of $\relu$, called \textit{softplus}, has been proposed in \cite{dugas2001incorporating}. Despite some
theoretical advantages over $\relu$ (it is differentiable everywhere and it has less saturation issues), this activation
function usually does not achieve better results \cite{glorot2011deep} when compared to its basic version.

More recently, \emph{maxout} has been introduced in \cite{goodfellow2013maxout} as an activation function aimed at
enhancing dropout’s abilities as a model averaging technique. Among its extensions, it is worth mentioning the \emph{probabilistic maxout} \cite{Sun2015}, and the
\emph{$L_p$ norm pooling activation} \cite{gulcehre2014learned} that is able to recover the maxout activation when $p
\to \infty$.

Considering the last developments of activation functions in neural networks, it is important to mention the
\emph{exponential linear unit function} ($\elu$)\cite{Clevert2016} and the \emph{scaled exponential linear unit
function} ($\selu$) \cite{klambauer2017self}. Like $\relu$, $\lrelu$, and $\prelu$, $\elu$ reduces the vanishing
gradient problem. Furthermore, $\elu$ has negative values, allowing to push mean unit activations closer to zero
like batch normalization, and speeding up the learning. $\selu$ extends this property ensuring that activations close
to zero mean and unit variance that are propagated through many network layers will converge towards zero mean and unit
variance, even under the presence of noise and perturbations.

With the exception of $\prelu$, all previous activations are pre-specified (\ie non-learnable). A first attempt to
learn activations in a neural network can be found in  \cite{liu1996evolutionary}, where the authors propose to
randomly add or remove logistic or Gaussian activation functions using an evolutionary programming method. On the other
hand, in \cite{poli1999parallel, weingaertner2002hierarchical, turner2013cartesian, khan2013fast} the authors proposed
novel approaches to learn the best activation function per neuron among a pool of allowed activations by means of
genetic and evolutionary algorithms.

A different method has been proposed in \cite{turner2014neuroevolution}, which is able to learn the \emph{hardness}
parameter of a sigmoid function, similarly to the approach employed by $\prelu$ to learn the leakage parameter.

However, all the previous learning techniques are limited by the fact that the family of functions over which the learning takes
place is either finite or a simple parameterization of customary activation functions.

Recently, in \cite{agostinelli2014learning} the authors tackle the problem from a different angle using
piecewise linear activation functions that are learned independently for each neuron using gradient descent. However,
\begin{enumerate*}[label=(\roman*)]
	\item the number of linear pieces is treated as a hyper-parameter of the optimization problem;
	\item the number of learned parameters increases proportionally to the amount of hidden units;
	\item all the learned piecewise linear activation functions are $\relu(x)$ for $x$ large enough (\ie there exists 
		$u\in\Reals$ such that $g(x) = \relu(x)$ for $x \geq u$), thus reducing the expressivity of the learned 
		activation functions by design.
\end{enumerate*}

It is worth mentioning that, in \cite{lin2013network} the authors propose the \emph{network in network} approach where
they replace activation functions in  convolutional layers with small multi-layer perceptrons.

In this paper we try to overcome some of the limitations of the aforementioned approaches. Indeed, the two techniques explained in Section~\ref{sec:methods}:
\begin{enumerate*}[label=(\roman*)]
	\item increase the expressivity power of the learned activation functions by enlarging the hypothesis space explored during training with respect to \cite{liu1996evolutionary, poli1999parallel, weingaertner2002hierarchical, turner2013cartesian, khan2013fast, turner2014neuroevolution};
	\item restrict the hypothesis space with respect to \cite{agostinelli2014learning, lin2013network}, in order to allow a faster training without the need of a careful initialization of network weights (see Proposition~\ref{proposition} and the following lines).
\end{enumerate*}

\section{Methods}\label{sec:methods}

A neural network $N_d$ made of $d$ hidden layers can be seen as the functional composition of $d$ functions $L_i$
followed by a final mapping $\bar L$ that depends on the task at hand (\eg classification, regression): $N_d = \bar L
\circ L_{d} \circ \ldots \circ L_{1}$. In particular, each hidden layer function $L_i$ can be written as the composition
of two functions, $g_i$ followed by $\sigma_i$, the former being a suitable remapping of the layer input neurons, the
latter being the activation function of the layer: $L_i = \sigma_i \circ g_i$. In the most general case, both $\sigma_i$
and $g_i$ are parameterized and belongs to some hypothesis spaces $\Set{H}_{\sigma_i}$ and $\Set{H}_{g_i}$,
respectively. Hence, the learning procedure of $L_i$ amounts to an optimization problem over the layer hypothesis space
$\Set{H}_i = \Set{H}_{\sigma_i} \times \Set{H}_{g_i}$.

Usually, $\sigma_i$ is taken as a non-learnable function; therefore, in the most common scenario $\Set{H}_{\sigma_i}$ is
a singleton: $\Set{H}_i = \{\sigma_i\} \times \Set{H}_{g_i}$. For example, for a  fully-connected layer from
$\Reals^{n_i}$ to $\Reals^{m_i}$ with $\relu$ activation we have that $\Set{H}_{g_i}$ is the set of all \emph{affine
transformations} from $\Reals^{n_i}$ to $\Reals^{m_i}$, \ie $\Set{H}_i = \{ \relu \} \times
\LinearMaps{\Reals^{n_i}}{\Reals^{m_i}} \times \Translations{\Reals^{m_i}}$, where
$\LinearMaps{\VectorSpace{A}}{\VectorSpace{B}}$ and $\Translations{\VectorSpace{B}}$ are the sets of linear
maps between $\VectorSpace{A}$ and $\VectorSpace{B}$, and the set of translations of $\VectorSpace{B}$, respectively.

In this paper, we introduce two techniques to define learnable activation functions that could be plugged in all hidden layers of a neural network architecture. The two approaches differ in how they define the
hypothesis space $\Set{H}_{\sigma_i}$. Both of them are based on the following idea:
\begin{enumerate*}[label=(\roman*)]
    \item șelect a finite set of activation functions $\Set{F} \coloneqq \{ f_1, \ldots, f_N \}$, whose elements  will 
    	be used as base elements;
    \item define the learnable activation function $\sigma_i$ as a linear combination of the elements of $\Set{F}$;
    \item identify a suitable hypothesis space $\Set{H}_{\sigma_i}$;
    \item optimize the whole network, where the hypothesis space of each hidden layer is $\Set{H}_i = \Set{H}_{\sigma_i} 
    	\times \Set{H}_{g_i}$.
\end{enumerate*}

We will now give some basic definitions used throughout the paper. Note that, hereinafter, all
activation functions from $\Reals$ to $\Reals$ will naturally extend to functions from $\Reals^n$ to $\Reals^n$ by means
of entrywise application.

Given a vector space $\VectorSpace{V}$ and a finite subset  $\Set{A}\subseteq\VectorSpace{V}$, we can define the
following two subsets of  $\VectorSpace{V}$:
\begin{enumerate}[noitemsep,topsep=0pt,label=(\roman*)]
	\item the \emph{convex hull} of $\Set{A}$, namely:
		\begin{equation*}
			\ConvexHull(\Set{A}) \coloneqq {\textstyle \{\sum_i c_i \vect{a}_i \mid \sum_i c_i = 1, 
					c_i \geq 0, \vect{a}_i \in \Set{A} \}};
		\end{equation*}
	\item the \emph{affine hull} of $\Set{A}$, namely:
		\begin{equation*}
			\AffineHull(\Set{A}) \coloneqq {\textstyle \{ \sum_i c_i \vect{a}_i \mid \sum_i c_i = 1, 
					\vect{a}_i \in \Set{A} \}}.
		\end{equation*}
\end{enumerate}
We remark that, neither $\ConvexHull(\Set{A})$ nor $\AffineHull(\Set{A})$ are vector subspaces of $\VectorSpace{V}$.
Indeed, $\ConvexHull(\Set{A})$ is just a generic convex subset in $\VectorSpace{V}$ reducing to a $(|\Set{A}| -
1)$-dimensional simplex whenever the elements of $\Set{A}$ are linearly independent. On the other hand,
$\AffineHull(\Set{A})$ is an \emph{affine subspace} of $\VectorSpace{V}$ of dimension $|\Set{A}| - 1$, \ie for an
arbitrary $\bar{\vect{a}} \in \AffineHull(\Set{A})$ the set $\{\vect{a} - \bar{\vect{a}} \mid
\vect{a}\in\AffineHull(\Set{A})\}$ is a \emph{linear subspace} of $\VectorSpace{V}$ of dimension $|\Set{A}| - 1$.
Clearly, $\ConvexHull(\Set{A})  \subset \AffineHull(\Set{A})$.

Let $\Set{F} \coloneqq \{ f_0, f_1, \ldots, f_N\}$ be a finite collection of activation functions $f_i$ from $\Reals$ to
$\Reals$. We can define a vector space $\VectorSpace{F}$ from $\Set{F}$ by taking all linear combinations $\sum_i c_i
f_i$. Note that, despite $\Set{F}$ is (by definition) a spanning set of $\VectorSpace{F}$, it is not generally a basis;
indeed $|\Set{F}| \geq \dim\VectorSpace{F}$.

Since (almost everywhere) differentiability is a property preserved by finite linear combinations, and since
$\ConvexHull(\Set{F}) \subset \AffineHull(\Set{F}) \subseteq \VectorSpace{F}$, assuming that $\Set{F}$ contains (almost
everywhere) differentiable activation functions, $\ConvexHull(\Set{F})$ and $\AffineHull(\Set{F})$ are made of (almost
everywhere) differentiable functions, \ie valid activation functions for a neural network that can be learned by means
of gradient descent.

The activations used in real world scenarios are usually monotonic increasing functions. Unfortunately, the monotonicity
is not ensured under arbitrary linear combination, meaning that even if all $f_i \in \Set{F}$ are non-decreasing
functions, an arbitrary element $\bar f \in \VectorSpace{F}$ might be neither a non-decreasing nor a non-increasing
function. As a matter of fact, considering only non-decreasing differentiable functions ($f_i' \geq 0$
$\forall f_i \in \Set{F}$), all non-decreasing differentiable functions in $\VectorSpace{F}$ lie inside
the \emph{convex cone} $\ConvexCone(\Set{F}) \subset \VectorSpace{F}$, \ie:
\begin{equation*}
	\ConvexCone(\Set{F}) \coloneqq {\textstyle\{ \sum_i c_i f_i \mid c_i \geq 0, 
		f_i \in \Set{F} \}}.
\end{equation*}
Indeed, $\forall g\in\ConvexCone(\Set{F})$ we have that $g' \geq 0$. Thanks to the definition of $\AffineHull(\Set{F})$,
$\ConvexCone(\Set{F})$, and $\ConvexHull(\Set{F})$, we can conclude that $\ConvexHull(\Set{F}) = \ConvexCone(\Set{F})
\cap \AffineHull(\Set{F})$, which implies that monotonicity of the elements of $\Set{F}$ is preserved by all the
elements of $\ConvexHull(\Set{F})$ but not by $\AffineHull(\Set{F})$ (see Figure~\ref{fig:sets}). Nevertheless,
in~\cite{cybenko1989approximation} it is shown that even non-monotonic activation functions can approximate arbitrarily
complex functions for sufficiently large neural networks. Indeed, in \cite{Goodfellow16} the authors trained a
feedforward network using cosine activation functions on the \mnist dataset obtaining an error rate smaller than $1\%$.
Therefore, also $\AffineHull(\Set{F})$ is a proper candidate for $\Set{H}_{\sigma_i}$.

\begin{figure}[t]
	\includegraphics[width=\columnwidth]{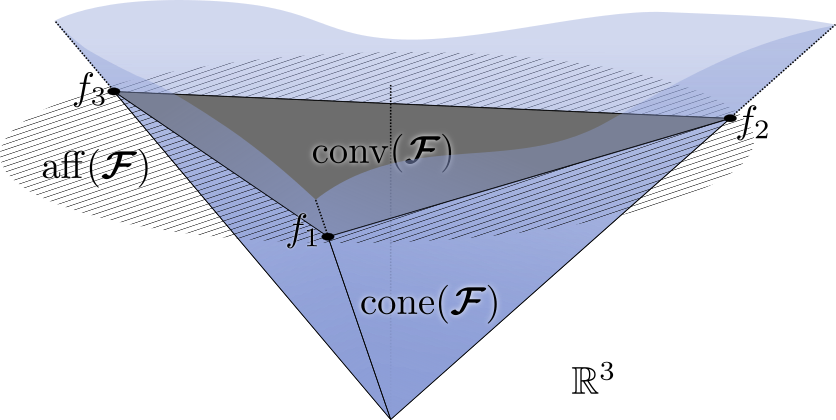}
    \caption{\label{fig:sets}The figure shows the relationship between affine hull, convex hull, and convex cone for a
		set $\Set{F}$ made of three linearly independent elements: $\Set{F} \coloneqq \{f_1, f_2, f_3\}$. Since
		$\dim\VectorSpace{F} = |\Set{F}| = 3$, $\ConvexHull(\Set{F})$ is a 2-simplex (the gray triangle 
		in the figure), while $ \AffineHull(\Set{F})$ is a plane within the three dimensional vector 
		space $\VectorSpace{F}$ (the plane identified by the dashed circle in the figure).
		$\ConvexCone(\Set{F})$ is the three dimensional manifold delimited by the three incident straight lines $l_i 
		\coloneqq \{ x\in\VectorSpace{F} \mid x = \alpha f_i, \forall 0\leq\alpha\in\Reals\}$ for $i=1, 2, 3$, \ie the 
		cone extremal rays. It can easily be seen that $\ConvexHull(\Set{F})$ corresponds to the intersection of  
		$\ConvexCone(\Set{F})$ with $\AffineHull(\Set{F})$.
		\vspace{-1em}
	}
\end{figure}

In addition, the affine subspace $\AffineHull(\Set{F})$ enjoys the following property.
\begin{proposition}\label{proposition}
	Let all $f_i\in\Set{F}$ be linearly independent and approximate the identity  function near origin (\ie $f_i(0) = 0$ 
	and $f_i'(0) = 1$), then, $\bar f \in \VectorSpace{F}$ approximates the identity if and only if  $\bar f \in
	\AffineHull(\Set{F})$.
\end{proposition}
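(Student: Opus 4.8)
The plan is to unwind the phrase ``$\bar f$ approximates the identity'' into the two pointwise conditions $\bar f(0) = 0$ and $\bar f'(0) = 1$ (the same formalization used for the $f_i$ in the hypothesis), and then translate each of these into a condition on the coefficients of $\bar f$ with respect to the spanning set $\Set{F}$. Since every element of $\VectorSpace{F}$ is a finite linear combination $\bar f = \sum_i c_i f_i$, and since evaluation at a point and differentiation at a point are linear operations (the $f_i$ being differentiable at $0$ by hypothesis), I would start from the two identities $\bar f(0) = \sum_i c_i f_i(0)$ and $\bar f'(0) = \sum_i c_i f_i'(0)$.

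First I would observe that the condition $\bar f(0) = 0$ is vacuous on $\VectorSpace{F}$: because $f_i(0) = 0$ for every $i$, we get $\bar f(0) = \sum_i c_i \cdot 0 = 0$ for \emph{every} $\bar f \in \VectorSpace{F}$, whatever the coefficients. Hence all the content is in the derivative condition. Using $f_i'(0) = 1$ for all $i$, I would then compute $\bar f'(0) = \sum_i c_i f_i'(0) = \sum_i c_i$, so that $\bar f$ approximates the identity if and only if $\sum_i c_i = 1$. By the definition of $\AffineHull(\Set{F})$, this equality is precisely the membership statement $\bar f \in \AffineHull(\Set{F})$, which yields both implications at once.

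The one genuine point to nail down — and the reason the linear-independence hypothesis is present — is that the scalar $\sum_i c_i$ must be a well-defined function of $\bar f$, not an artefact of the chosen representation $\bar f = \sum_i c_i f_i$. Since $\Set{F}$ is linearly independent, it is a basis of $\VectorSpace{F}$, so the coefficients $c_i$ are uniquely determined by $\bar f$ and the quantity $\sum_i c_i$ is unambiguous; the chain of equivalences above then closes. Without linear independence the ``only if'' direction would break, since an element of $\AffineHull(\Set{F})$ could also be expanded with coefficients summing to a value other than $1$. I expect this well-definedness remark to be the only subtle step; everything else is a direct computation using the linearity of evaluation and differentiation at the origin together with $f_i(0)=0$ and $f_i'(0)=1$.
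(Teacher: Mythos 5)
Your proof is correct and follows essentially the same route as the paper's: expand $\bar f = \sum_i c_i f_i$ in the basis $\Set{F}$, observe that $\bar f(0)=0$ is automatic and that $\bar f'(0) = \sum_i c_i$, so approximating the identity is equivalent to $\sum_i c_i = 1$, \ie to membership in $\AffineHull(\Set{F})$. Your explicit remark that linear independence is what makes $\sum_i c_i$ well defined is a nice touch the paper leaves implicit, but it is the same argument.
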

\begin{proof}
	The proof is immediate. Let us expand $\VectorSpace{F} \ni \bar f$ as $\bar f = \sum_i c_i f_i$ (by hypothesis the 
	$f_i$ form a basis) and prove the two-way implication.
	\begin{description}[leftmargin=0cm, topsep=0pt]
		\item[($\Rightarrow$)] By hypothesis, $\bar f(0) = 0$ and $\bar f'(0) = 1$. The relation on the derivative 
			reads $\sum_i c_i f'_i(0) = \sum_i c_i = 1$. In turns, this implies that $\bar f = \sum_i c_i f_i$ with
			 $\sum_i c_i = 1$, namely $\bar f \in \AffineHull(\Set{F})$.
		\item[($\Leftarrow$)] By hypothesis, $\bar f = \sum_i c_i f_i$ with $\sum_i c_i = 1$. Hence, $\bar f(0) = 
			\sum_i c_i f_i(0) = 0$ and $\bar f'(0) = \sum_i c_i f'_i(0) = \sum_i c_i = 1$, namely $\bar f$ approximates 
			the identity near origin.
	\end{description}
	\vspace{-1em}
\end{proof}

Since $\ConvexHull(\Set{F}) \subset \AffineHull(\Set{F})$, also $\ConvexHull(\Set{F})$ enjoys the same property. It is
important to underline that, activation functions approximating the identity near origin, have been argued to be
desirable to train deep networks randomly initialized with weights near zero \cite{identity}. Indeed, for such an
activation function the initial gradients will be relatively large, thus speeding up the training phase. Note that, such
behavior is enjoyed also by $\relu$, which approximates the identity only from one side: \ie $x \to {0^+}$. Furthermore,
training deep networks with activation functions not approximating the identity near origin requires a more careful
initialization, in order to properly scale the layers input \cite{sussillo2014random}.

For the aforementioned reasons:
\begin{enumerate*}[label=(\roman*)]
	\item preserved differentiability;
	\item absence of the requirement of monotonicity;
	\item approximation of the identity near origin;
\end{enumerate*}
both $\AffineHull(\Set{F})$ and $\ConvexHull(\Set{F})$ are good candidates for 
$\Set{H}_{\sigma_i}$. 

Thanks to the previous definitions and the aforementioned properties, we can now formalize the two techniques to build
learnable activation functions as follows:
\begin{enumerate}[noitemsep,topsep=0pt,label=(\roman*)]
	\item \label{enum:requirements}choose a finite set $\Set{F} = \{f_1, \ldots, f_N \}$, where each $f_i$ is a (almost
		everywhere) differentiable activation function approximating the identity near origin (at least from one side);
	\item define a new activation function $\bar f$ as a linear combination of all the $f_i \in \Set{F}$;
	\item select as hypothesis space $\Set{H}_{\bar f}$ the sets $\AffineHull(\Set{F})$ or $\ConvexHull(\Set{F})$.
\end{enumerate}

In Section~\ref{sec:results}, we present results using the following choices for $\Set{F}$: 
\begin{equation}\label{eq:chosen-set-f}
	\begin{aligned}
		&\Set{F} \coloneqq \{ \identity, \relu \}, &&
		\Set{F} \coloneqq \{ \identity, \tanh \}, \\
		&\Set{F} \coloneqq \{ \relu, \tanh \}, &&
		\Set{F} \coloneqq \{ \identity, \relu, \tanh \},
	\end{aligned}
\end{equation}
where $\identity$ is the identity function. Clearly, other choices of $\Set{F}$ may be available, provided the requirements
in \ref{enum:requirements} are satisfied.

Since $\ConvexHull(\Set{F}) \subset \AffineHull(\Set{F})$, the convex hull-based technique can to be understood as a
\emph{regularized version} of the affine hull-based one, where the corresponding hypothesis space has been explicitly
constrained to be \emph{compact}. Such a regularization, in addition to restrict the complexity of the hypothesis space,
guarantees that the final activation function is monotonic (provided all $f_i \in \Set{F}$ are monotonic as well).
Moreover, the convex hull-based technique together with $\Set{F} \coloneqq \{\identity, \relu,\}$ recovers the learnable
$\lrelu$ activation function, \ie $\lrelu_\alpha(x) = x$ if $x \geq 0$, while $\lrelu_\alpha(x) = \alpha x$ otherwise,
where $0 \leq \alpha \ll 1$ (usually $\alpha = 10^{\Minus2}$).  Indeed, $\ConvexHull(\Set{F}) = \{ \bar f \coloneqq p
\cdot \identity + (1 - p) \cdot \relu \text{ with } 0\leq p \leq1\}$ and since $\relu = \identity$ for $x\geq0$ and
$\relu = 0$ otherwise, we have that  $\bar f = p \cdot \identity + (1 - p) \cdot \identity = \identity$ for $x\geq0$ and
$\bar f = p \cdot \identity + (1 - p) \cdot 0 = p\cdot\identity$ otherwise, \ie $\lrelu_p$.

\begin{figure}[t]
	\centering
	\includegraphics[width=\columnwidth]{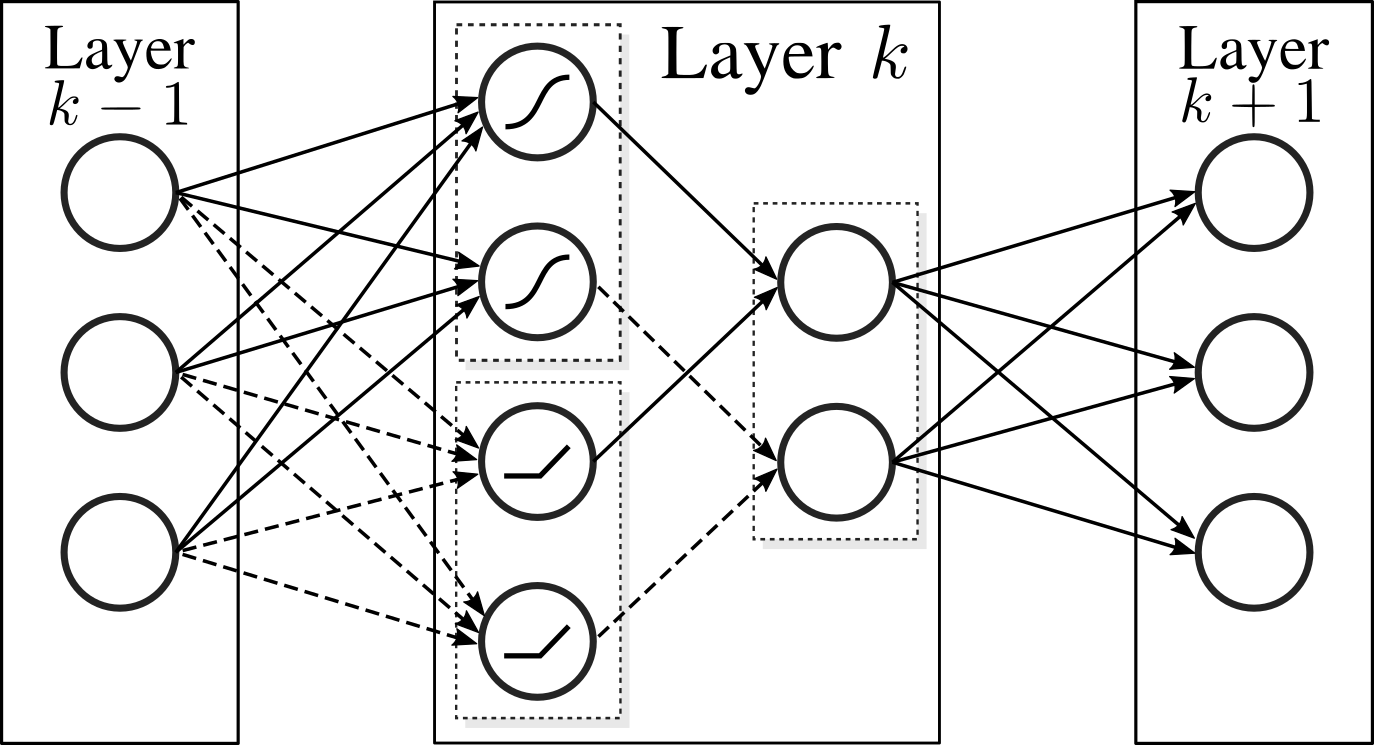}
    \caption{\label{fig:stacked}The figure shows how a neural network layer with convex/affine hull-based activation can 
    	be seen as a two-stage pipeline. Here the $k$-th layer is a two neurons layer, with set $\Set{F}$ composed of 
    	two base functions. The first stage of the pipeline is made of two stacked layers, each one featuring only one 
    	activation belonging to $\Set{F}$. The weights between the two stacked layers are shared. The second stage of
    	the pipeline is a 1D-convolution with kernel of size 1 between the stacked layers, which play the role of $n$ 
    	channels for the convolution. The weights of the convolution are constrained to sum to one, and also to be 
    	positive when using the convex hull-based technique.
		\vspace{-1em}
	}
\end{figure}
It is worth mentioning that, as shown in Figure~\ref{fig:stacked}, layers using convex hull-based and affine hull-based 
activations with $n$ base functions can also be seen as the following two-stage pipeline: 
\begin{enumerate*}[label=(\roman*)]
	\item $n$ stacked fully connected layers, each of them featuring one of the base functions, and all of them sharing 
		weights;
	\item a 1D-convolution with kernel of size 1 between the $n$ stacked layers (which are treated by the convolution as 
		$n$ separate channels), whose weights are constrained to sum to one (and to be positive in case of the convex 
		hull-based technique)\footnote{The 1D-convolution with kernel of size 1 can also be seen as a weighted average 
		of the stacked fully connected layers (with possibly negative weights in case of the affine hull-based
		technique).}.
\end{enumerate*}

\section{Results}\label{sec:results}

\begin{figure*}[t]
   \includegraphics[width=\textwidth]{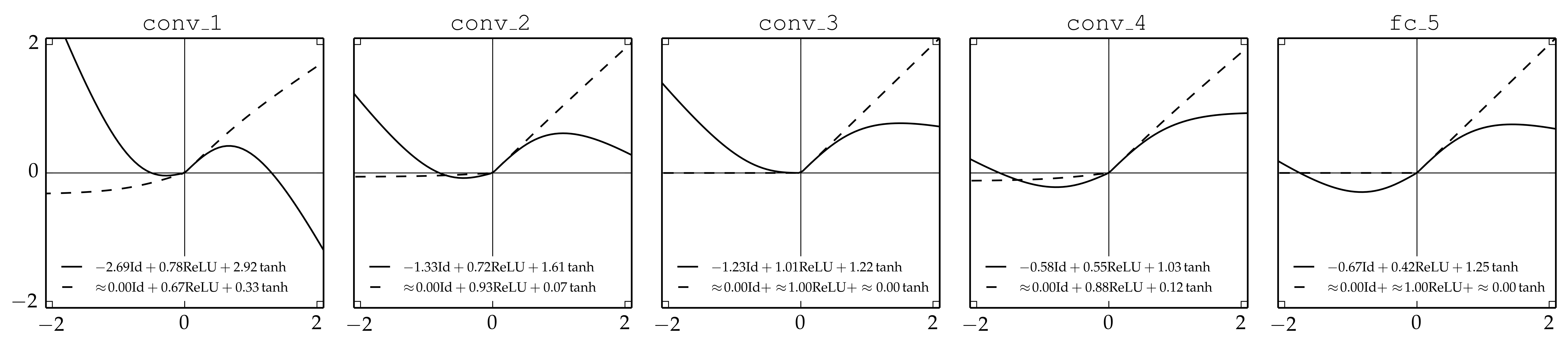}
   \caption{\label{fig:kerasnet-activations}The figure shows the hidden affine hull-based and convex hull-based 
      activation functions learned during the training of the \kerasnet architecture on \cifar dataset. Activations 
      learned on $\AffineHull(\{\identity, \relu, \tanh\})$ are represented with solid lines, while those learned on 
      $\ConvexHull(\{\identity, \relu, \tanh\})$ are dashed.
   }
\end{figure*}

We tested the convex hull-based and the affine hull-based approaches by evaluating their effectiveness on three publicly
available datasets used for image classification, greatly differing in the number of input features and examples.
Moreover, each dataset was evaluated using different network architectures. These networks were trained and tested using
as activation functions (for all their hidden layers) those learned by the convex hull-based and the affine hull-based
approaches combining the base activations reported in Equation~\eqref{eq:chosen-set-f}. In addition, the base
activation functions alone and $\lrelu$ were also employed in order to compare the overall performance.

Specifically, the datasets we used are:
\begin{description}[leftmargin=0cm, topsep=0pt]
   \item[\fashionmnist:] it is a dataset of Zalando's article images, composed by a 
      training and test sets of 60k and 10k examples, respectively. Each example is a 28x28 grayscale image, associated 
      with a label that belongs to 10 classes \cite{xiao2017/online}. We divided each pixel value by 255, and we augmented the
      resulting dataset during training phase by means of random horizontal flip and image shifting;
   \item[\cifar:] it is a balanced dataset of 60k 32x32 colour images belonging to 10 classes. There 
      are 50k training images and 10k test images \cite{krizhevsky2009learning}. Also in this case we divided each pixel 
      value by 255, and we augmented the resulting dataset during training phase by means of random horizontal 
      flip and image shifting;
   \item[\ilsvrc:] a 1k classes classification task with 1.2M training examples and 50k validation examples. The 
      examples are colour images of various sizes \cite{ILSVRC15}. We resized each image to 256x256 pixels, we 
      subtracted pixel mean values, and we augmented the dataset during training phase by randomly cropping to 227x277 
      pixels and horizontally flipping the images. Note that, we did not train our network with the relighting data-augmentation proposed in~\cite{krizhevsky2012imagenet}. 
\end{description}

\begin{table}
  \centering
  \caption{\kerasnet architecture}
  \label{tab:keras-architecture}
  \resizebox{.5\columnwidth}{!}{
     \begin{tabular}{@{}llc@{}}
        \toprule
        Id              & Layers          & Properties       \\
        \midrule
        \multirow{2}{*}{\muacro{conv\_1}} & 2D convolution  & $32\times(3, 3)$ \\
                                          & Activation      &                  \\
        \midrule
        \multirow{2}{*}{\muacro{conv\_2}} & 2D convolution  & $32\times(3, 3)$ \\
                                          & Activation      &                  \\
        \midrule
                                          & Max pooling     & $(2, 2)$         \\
                                          & Dropout         & $25\%$           \\
        \midrule
        \multirow{2}{*}{\muacro{conv\_3}} & 2D convolution  & $64\times(3, 3)$ \\
                                          & Activation      &                  \\
        \midrule
        \multirow{2}{*}{\muacro{conv\_4}} & 2D convolution& $64\times(3, 3)$   \\
                                          & Activation      &                  \\
        \midrule
                                          & Max pooling     & $(2, 2)$         \\
                                          & Dropout         & $25\%$           \\
        \midrule
        \multirow{2}{*}{\muacro{fc\_5}}   & Fully connected & $512$            \\
                                          & Activation      &                  \\
        \midrule
                                          & Dropout         & $50\%$           \\
        \midrule
        \multirow{2}{*}{\muacro{fc\_6}}   & Fully connected & $10$             \\
                                          & Activation      & $\softmax$       \\
        \bottomrule
     \end{tabular}
  }
  \vspace{-2em}
\end{table}

The considered architectures are the following:
\begin{description}[leftmargin=0cm, topsep=0pt]
   \item[\lenet:] a convolutional network made of two convolutional layers followed by two fully connected layers 
      \cite{lecun1998gradient}. The convolutional layers have respectively 20 and 50 filters of size $5\times5$, while 
      the hidden fully connected layer is made of 500 neurons. Max pooling with size $2\times2$ is used after each 
      convolutional layer, without dropout. We assessed \lenet on \fashionmnist and \cifar datasets, 
      resulting in two networks with 431k and 657k parameters, respectively;
   \item[\kerasnet:] a convolutional neural network included in the Keras framework \cite{chollet2015keras}. It is made 
      of 4 convolutional layers and 2 fully connected layers, and it employs both max pooling and dropout. This 
      architecture is presented in Table~\ref{tab:keras-architecture}. We tested \kerasnet on both \fashionmnist and 
      \cifar datasets, resulting in two networks with 890k and 1.2M parameters, respectively;
   \item[\resnet:] a residual network made of 56 total layers, employing pooling and skip connection \cite{he2016deep}. 
      Its performance has been evaluated on \cifar, corresponding to a network with 858k parameters;
   \item[\alexnet:] a convolutional network made of 5 convolutional and 3 fully connected layers 
      \cite{krizhevsky2012imagenet}. We tested it against \ilsvrc dataset, resulting in a network with 61M parameters. 
      Note that, as shown in \cite{krizhevsky2012imagenet}, $\relu$-based activation functions significantly outperform
      networks based on other activations. Therefore, in this context $\relu$-based networks only have been built and 
      tested for comparison.
\end{description}
All networks were implemented from scratch using the Keras framework \cite{chollet2015keras} on top of TensorFlow
\cite{tensorflow2015-whitepaper}. Notice that, our \alexnet implementation is a porting to Keras of the \emph{Caffe} architecture\footnote{See {https://github.com/BVLC/caffe/tree/master/models/bvlc\_alexnet}.}. We trained \lenet, \kerasnet, and \resnet using \rmsprop with learning 
rate $10^{\Minus4}$ and learning rate decay over each mini-batch update of $10^{\Minus6}$. For \alexnet we used \sgd with starting learning rate $10^{\Minus2}$, step-wise learning rate decay, weight-decay 
hyper-parameter $5\cdot10^{\Minus4}$, and momentum $0.9$.

\begin{table*}
   \centering
   \caption{Experiment results.
      The table shows the top-1 accuracy results for all the analyzed networks on \fashionmnist test set, \cifar test set, and 
      \ilsvrc validation set. Convex hull-based and affine hull-based activations achieving top-1 accuracy results greater 
      than their corresponding base activation functions are highlighted in boldface. The best result for each 
      network/dataset is shaded.
   }
   \label{tab:results}
   \begin{tabular}{@{}llcclccclc@{}}
      \toprule
                                                 && \multicolumn{2}{c}{\fashionmnist}      & & \multicolumn{3}{c}{\cifar}                          & & \ilsvrc    \\
      \cmidrule(l){3-4}\cmidrule(l){6-8}\cmidrule(l){10-10}
      Activation                                 && \lenet             & \kerasnet         & & \lenet             & \kerasnet          & \resnet   & & \alexnet   \\
      \midrule
      $\identity$                                && $90.50\%$          & $90.51\%$         & & $75.27\%$          & $75.23\%$          & $42.60\%$ & & ---        \\ 
      $\relu$                                    && $91.06\%$          & $90.79\%$         & & $80.37\%$          & $79.97\%$          & $89.18\%$ & & $56.75\%$  \\ 
      $\tanh$                                    && $92.33\%$          & $93.43\%$         & & $78.96\%$          & $82.86\%$          & $82.65\%$ & & ---        \\
      \cmidrule(r){1-1}\cmidrule(l){3-4}\cmidrule(l){6-8}\cmidrule(l){10-10}
      $\lrelu$                                    && $91.03\%$          & $91.13\%$         & & $80.94\%$          & $80.07\%$          & $89.38\%$ & & $57.03\%$        \\
      \cmidrule(r){1-1}\cmidrule(l){3-4}\cmidrule(l){6-8}\cmidrule(l){10-10}
      $\ConvexHull(\{\identity, \relu\})$        && $\bm{91.87\%}$     & $\bm{92.39\%}$  & & $\bm{80.94\%}$ & $\bm{84.74\%}$ & \cellcolor{black!25}$\bm{90.51\%}$     & & $\bm{57.54\%}$  \\ 
      $\ConvexHull(\{\identity, \tanh\})$        && $\bm{92.36\%}$     & $\bm{93.64\%}$   & & $\bm{79.45\%}$ & ${78.53\%}$ & $\bm{86.46\%}$       & & ---  \\
      $\ConvexHull(\{\tanh, \relu\})$            && $\bm{92.56\%}$     & $92.04\%$        & & $80.21\%$ & $\bm{84.80\%}$ & ${88.31\%}$     & & $\bm{58.13\%}$  \\
      $\ConvexHull(\{\identity, \relu, \tanh\})$ && $92.21\%$          & $92.94\%$         & & $\bm{80.48\%}$ & $\bm{85.21\%}$ & $\bm{89.60\%}$     & & $56.55\%$  \\   
      \cmidrule(r){1-1}\cmidrule(l){3-4}\cmidrule(l){6-8}\cmidrule(l){10-10}
      $\AffineHull(\{\identity, \relu\})$        && $\bm{92.83\%}$     & $\bm{93.37\%}$    & & $\bm{80.52\%}$ & $\bm{84.93\%}$ & $88.92\%$         & & $\bm{58.48\%}$  \\
      $\AffineHull(\{\identity, \tanh\})$        && $\bm{92.65\%}$     & \cellcolor{black!25}$\bm{94.41\%}$    & & $\bm{78.97\%}$ & $\bm{86.05\%}$ & $\bm{82.97\%}$    & & --- \\   
      $\AffineHull(\{\tanh, \relu\})$            && \cellcolor{black!25}$\bm{93.02\%}$     & $\bm{93.48\%}$    & & \cellcolor{black!25}$\bm{81.23\%}$ & $\bm{84.83\%}$ & $\bm{89.44\%}$    & & \cellcolor{black!25}$\bm{60.04\%}$  \\
      $\AffineHull(\{\identity, \relu, \tanh\})$ && $\bm{92.80\%}$     & \cellcolor{black!25}$\bm{94.41\%}$    & & $80.13\%$ & \cellcolor{black!25}$\bm{87.45\%}$ & $88.62\%$         & & $\bm{57.20\%}$  \\
      \bottomrule
   \end{tabular}
   \vspace{-2em}
\end{table*}

Table~\ref{tab:results} shows the top-1 accuracy for all the run experiments. The best configurations (shaded cells in
the table) are always achieved using our techniques, where in 5 out of 6 experiments the affine hull-based approach outperformed convex hull-based ones. The
uplift in top-1 accuracy using our approaches compared to customary activations goes from $0.69$
percentage points (\pp) for \lenet on \fashionmnist up to $4.59$ \pp for \kerasnet on \cifar. 
It is worth mentioning that, even in deep neural networks, such as \alexnet, a substantial increase is observed, \ie $3.01$ \pp on \ilsvrc.

Moreover, the proposed techniques usually
achieve better results than their corresponding base activation functions (boldface in the table). Note that, the novel
activations work well for very deep networks and also with various architectures involving different types of layer
(\eg residual unit, dropout, pooling, convolutional, and fully connected).

Furthermore, our experiments show how the learning of the leakage parameter achieved by the
activation based on $\ConvexHull(\{\identity, \relu\})$ allows to outperform or to achieve the same results of $\lrelu$.

Figure~\ref{fig:kerasnet-activations} shows activations learned by \kerasnet on \cifar when using convex hull-based
and affine hull-based activations with $\Set{F} = \{ \identity, \relu, \tanh \}$. It is possible to notice that, as
already theoretically proved in Section~\ref{sec:methods}, convex combinations preserved monotonicity of the base
activation functions while affine ones did not. 

\section{Conclusion}\label{sec:conclusion}

In this paper we introduced two novel techniques able to learn new activations starting from a finite collection
$\Set{F}$ of base functions. Both our ideas are based on building an arbitrary linear combination of the elements of
$\Set{F}$ and on defining a suitable hypothesis space where the learning procedure of the linear combination takes
place. The hypothesis spaces for the two techniques are $\ConvexHull(\Set{F})$ and $\AffineHull(\Set{F})$. We showed
that, provided all the elements of $\Set{F}$ approximate the identity near origin, $\AffineHull(\Set{F})$ is the only
set where it is possible to find combined activations that also approximate $\identity$ near origin. Moreover, $\AffineHull(\Set{F})$
allows to explore non-monotonic activation functions, while $\ConvexHull(\{ \identity, \relu \})$ may be seen as a learnable
$\lrelu$ activation function.

We tested the two techniques on various architectures (\lenet, \kerasnet, \resnet, \alexnet) and datasets
(\fashionmnist, \cifar, \ilsvrc), comparing results with $\lrelu$ and single base activation functions.

In all our experiments, the techniques proposed in this paper achieved the best performance and the combined activation
functions learned using our approaches usually outperform the corresponding base components. The effectiveness
of the proposed techniques is further proved by the increase in performance achieved using networks
with different depths and architectures.

In our opinion, an interesting extension of this work would be to analyze other sets ($\Set{F}$) of base functions.

\ifCLASSOPTIONcompsoc
  \section*{Acknowledgments}
\else
  \section*{Acknowledgment}
\fi

The authors would like to thank Enrico Deusebio for the many insightful 
discussions.



\bibliographystyle{IEEEtran}

\bibliography{IEEEfull,bibliography.bib}
%



\end{document}